\documentclass[preprint,3p]{elsarticle}

\usepackage{amsmath,enumerate,subfigure,multirow,hhline,color}
\usepackage{xcolor}
\usepackage{graphicx}
\usepackage{graphics}
\usepackage{amssymb}
\usepackage{amsthm}

\usepackage{epsfig,psfrag}
\usepackage{tabularx}
\usepackage{tabulary}
\usepackage{algorithm}
\usepackage{algpseudocode}

\usepackage[sort]{cite}

\usepackage{hyperref}
\hypersetup{breaklinks=true,colorlinks=true,linkcolor=blue,citecolor=blue}

\usepackage[noabbrev,capitalise,nameinlink]{cleveref}

\DeclareMathOperator{\argmax}{arg\,max}

\newtheorem{theorem}{Theorem}

\newtheorem{assumption}{Assumption}

\newtheorem{corollary}{Corollary}

\newtheorem{lemma}{Lemma}
\newtheorem{definition}{Definition}

\journal{Neurocomputing}

\allowdisplaybreaks

\begin{document}

\begin{frontmatter}

\title{Finite-Time Accuracy of Temporal-Difference Learning Under Schur-Stable Recursions}

\author[1]{Donghwan Lee}\ead{donghwan@kaist.ac.kr}
\author[1]{Dowan Kim}\ead{dwk@sejong.ac.kr}

\address[1]{Department of Electrical Engineering, KAIST, Daejeon, 34141, South Korea}
\address[1]{Department of Defense Systems Engineering, Sejong University, Seoul, 05006,, South Korea}


\begin{abstract}                          
Temporal difference (TD) learning is a cornerstone reinforcement learning (RL) method for policy evaluation, where the goal is to estimate the value function of a Markov decision process under a fixed policy. While a substantial body of work has established its convergence and stability properties, more recent efforts have focused on its statistical efficiency through finite-time error bounds. In this paper, we advance this line of research by developing a new finite-time error analysis for tabular TD learning that directly exploits a discrete-time stochastic linear system representation and leverages Schur stability of the associated matrices. Beyond the specific bounds obtained, the proposed framework provides a reusable template for analyzing TD learning and related RL algorithms, and it offers control-theoretic insights that may guide future developments in finite-sample RL theory.
\end{abstract}

\begin{keyword}                           
Reinforcement learning, TD-learning, convergence analysis, control theory, Lyapunov function
\end{keyword}                             

\end{frontmatter}

\section{Introduction}
Temporal-difference (TD) learning~\citep{sutton1988learning} is one of the most fundamental reinforcement learning (RL) algorithms~\citep{sutton1998reinforcement} for policy evaluation. This idea has served as the basis for a variety of advanced methods, including Q-learning~\citep{watkins1992q}, SARSA~\citep{rummery1994line}, actor-critic algorithms~\citep{konda2000actor}, and more recent developments such as deep Q-learning~\citep{mnih2015human} and gradient TD methods~\citep{sutton2009fast}, among many others. There is a substantial body of work on the theoretical convergence analysis of TD learning~\citep{jaakkola1994convergence,borkar2000ode,tsitsiklis1997analysis}. Traditional approaches primarily focus on asymptotic behavior. Only more recently have researchers investigated guarantees on statistical efficiency and finite-time performance~\citep{bhandari2021finite,dalal2018finite,srikant2019,lakshminarayanan2018linear,hu2019characterizing,li2021sample,zhang2021finite,mou2024optimal,chandak2022concentration,chen2021lyapunov,chen2020finite,wang2021finite,patil2023finite,doan2019finite,sun2020finite}. These advances characterize how fast TD iterates approach the desired solution by providing finite-time error bounds that explicitly depend on the number of time steps. Such analyses shed new light on the statistical properties of TD learning and complement classical asymptotic results.

In this paper, we present a control-theoretic finite-time error analysis of tabular TD learning and derive mean-squared error bounds for both the final iterate and the averaged iterate. Our analysis establishes a connection between tabular TD learning and Schur stability~\citep{chen1995linear} of an associated discrete-time linear system, offering additional insights from a control-theoretic perspective. In contrast, traditional approaches such as the ODE method~\citep{borkar2000ode} and more recent finite-time analyses~\citep{srikant2019,dalal2018finite} typically rely on Hurwitz stability~\citep{chen1995linear} of corresponding continuous-time linear models. Under small or diminishing step sizes, TD learning is approximated by a continuous-time system, which enables the use of Hurwitz stability to establish convergence. However, for tabular TD learning, passing through a continuous-time approximation can introduce redundancy and additional technical overhead.
Our approach bypasses this transition and directly exploits Schur stability of a discrete-time linear system model of TD learning. This leads to simpler proofs, sharper error bounds, and more relaxed step-size conditions. For the final-iterate analysis, our primary focus is on propagating the state correlation matrix, which corresponds to the Lyapunov matrix associated with the system matrix $A^\top$. In contrast, the averaged-iterate analysis leverages standard Lyapunov theory~\citep{khalil2002nonlinear} for the dual system matrix $A$. In this sense, the two analyses exhibit a natural dual relationship.

Finally, we emphasize that this paper focuses on an i.i.d.\ observation model and constant step sizes to simplify the exposition. The i.i.d.\ model is widely used in the literature and is often considered a standard setting (e.g.,~\citep{sutton2009fast,dalal2018finite,borkar2000ode}). The proposed analysis can be extended to more general Markovian observation models using techniques from prior work such as~\citep{srikant2019,bhandari2021finite}. However, this generalization would substantially increase technical complexity and may obscure the core insights of the proposed approach. For clarity, we therefore restrict attention to the i.i.d. observation model.

\section{Related works}
Recently, significant progress has been made in the finite-time error analysis of TD-learning algorithms~\citep{bhandari2021finite,dalal2018finite,srikant2019,lakshminarayanan2018linear,hu2019characterizing,li2021sample,zhang2021finite,mou2024optimal,chandak2022concentration,chen2021lyapunov,chen2020finite,wang2021finite,patil2023finite,doan2019finite,sun2020finite}.
In particular, \citep{dalal2018finite} presented a finite-time error analysis of TD-learning with linear function approximation by exploiting the Hurwitz stability of an associated continuous-time linear system model.
\citep{lakshminarayanan2018linear} studied finite-time error bounds for general linear stochastic approximation algorithms, with TD-learning as a prominent example.
\citep{bhandari2021finite} developed a simple and explicit finite-time analysis for TD-learning, relying on arguments analogous to those used for stochastic gradient descent.
\citep{srikant2019} analyzed TD-learning by controlling both first- and higher-order moments of the error, again leveraging Hurwitz stability of continuous-time linear system models.
\citep{hu2019characterizing} examined TD-learning through the lens of discrete-time Markovian jump linear systems and characterized the resulting algorithmic behaviors.
Using Lyapunov methods for general Markovian stochastic approximation, \citep{chen2021lyapunov,chen2020finite} derived mean-square error bounds and obtained finite-sample mean-squared error guarantees for TD-learning as a corollary.
\citep{mou2024optimal} provided a sharp analysis of averaged linear stochastic approximation schemes and applied it to the averaged iterates of TD-learning.
Sharper sample-complexity bounds for TD-learning were obtained by \citep{li2021sample}.
\citep{wang2021finite} studied biased stochastic approximation under a mild ergodicity-type assumption on the noise sequence and derived finite-time mean-squared error bounds, which were subsequently specialized to TD-learning with linear function approximation, constant step size, and Markovian observation models.
\citep{patil2023finite} established finite-time error bounds for tail-averaged TD-learning under step-size choices that do not require eigenvalue information of the matrix defining the projected TD fixed point.
\citep{chandak2022concentration} derived concentration bounds for stochastic approximation with contractive maps under both martingale-difference and Markov noises, and applied these results to tabular TD-learning.
Finally, \citep{zhang2021finite} provided the first sample-complexity guarantees for average-reward TD-learning with linear function approximation, while \citep{doan2019finite,sun2020finite} developed finite-time error analyses for distributed TD-learning.

Among the works above, we focus on the single-agent Markov decision problem with discounted reward summations~\citep{bhandari2021finite,dalal2018finite,srikant2019,lakshminarayanan2018linear,hu2019characterizing,li2021sample,zhang2021finite,mou2024optimal,chandak2022concentration,chen2021lyapunov,chen2020finite,wang2021finite,patil2023finite}.
The key distinction of the proposed framework is its direct exploitation of Schur stability in a discrete-time linear system model.
By contrast, existing finite-time analyses often rely on Hurwitz stability of continuous-time linear system models~\citep{dalal2018finite,srikant2019,borkar2000ode}, contraction-mapping arguments~\citep{chandak2022concentration}, SGD-inspired techniques~\citep{bhandari2021finite}, or other stochastic-approximation tools~\citep{lakshminarayanan2018linear,li2021sample,wang2021finite,zhang2021finite,mou2024optimal,chen2021lyapunov,chen2020finite,patil2023finite}.
As a result, our analysis offers additional insights through control-theoretic notions, which may be particularly transparent to readers with a control background.
Another advantage is that the constant step-size choice $\alpha\in(0,1)$ does not require prior knowledge of environment parameters; moreover, $\alpha\in(0,1)$ is among the most general step-size conditions adopted in the existing literature. A current limitation is that our analysis is restricted to the tabular setting.
Nevertheless, results tailored to this setting remain valuable, as they provide complementary, and often simpler, insights alongside existing analyses.

\section{Preliminaries}

\subsection{Notation}
The adopted notation is as follows: ${\mathbb R}$: set of real numbers; ${\mathbb R}^n $: $n$-dimensional Euclidean
space; ${\mathbb R}^{n \times m}$: set of all $n \times m$ real
matrices; $A^\top$: transpose of matrix $A$; $A \succ 0$ ($A \prec
0$, $A\succeq 0$, and $A\preceq 0$, respectively): symmetric
positive definite (negative definite, positive semi-definite, and
negative semi-definite, respectively) matrix $A$; $I$: identity matrix with appropriate dimensions; $[A]_{ij}$ is the element of $A$ in $i$-th row and $j$-th column; $\lambda_{\min}(A)$ and $\lambda_{\max}(A)$ for any symmetric matrix $A$: the minimum and maximum eigenvalues of $A$; $|{\mathcal S}|$: cardinality of a finite set $\mathcal S$; ${\rm tr}(A)$: trace of any matrix $A$; $\otimes$: Kronecker product.

\subsection{Markov decision problem}
We consider the infinite-horizon discounted Markov decision problem (MDP)~\citep{bertsekas1996neuro}, where the agent sequentially takes actions to maximize cumulative discounted rewards. In a Markov decision process with the state-space ${\mathcal S}:=\{ 1,2,\ldots ,|{\mathcal S}|\}$ and action-space ${\mathcal A}:= \{1,2,\ldots,|{\mathcal A}|\}$, the decision maker selects an action $a \in {\mathcal A}$ with the current state $s$, then the state
transits to a state $s'$ with probability $P(s'|s,a)$, and the transition incurs a
reward $r(s,a,s')$. For convenience, we consider a deterministic reward function and simply write $r(s_k,a_k ,s_{k + 1}) =:r_{k+1}, k \in \{ 0,1,\ldots \}$. A stochastic policy is a map $\pi:{\mathcal S} \times {\mathcal A}\to [0,1]$ representing the probability, $\pi(a|s)$, of selecting action $a$ at the current state $s$, while a deterministic policy is a map $\pi:{\mathcal S} \to {\mathcal A}$. In this paper, we usually focus on the stochastic policy because the deterministic counterpart can be seen as a special case of the stochastic case. The objective of the Markov decision problem (MDP) is to find a deterministic optimal policy, $\pi^*$, such that the cumulative discounted rewards over infinite-time horizons is
maximized, i.e., $\pi^*:= \argmax_{\pi\in \Theta} {\mathbb E}\left[\left.\sum_{k=0}^\infty {\gamma^k r_{k+1}}\right|\pi\right]$, where $\gamma \in [0,1)$ is the discount factor, $\Theta$ is the set of all admissible deterministic policies, $(s_0,a_0,s_1,a_1,\ldots)$ is a state-action trajectory generated by the Markov chain under policy $\pi$, and ${\mathbb E}[\cdot|\pi]$ is an expectation conditioned on the policy $\pi$. The value function under policy $\pi$ is defined as
\begin{align*}
&V^{\pi}(s)={\mathbb E}\left[ \left. \sum_{k=0}^\infty {\gamma^k r_{k+1}} \right|s_0=s,\pi \right],\quad s\in {\mathcal S}.
\end{align*}

Based on these notions, the policy evaluation problem is defined as follows.
\begin{definition}[Policy evaluation problem]
Given a policy $\pi$, find the corresponding value function $V^\pi$.
\end{definition}
The policy evaluation problem is an important component of policy optimization problems for the Markov decision problem. The (model-free) policy evaluation problem addressed in this paper is defined as follows: given a policy $\pi$, find the corresponding value function $V^\pi$ without the model knowledge, i.e., $P$, only using experiences or transitions $(s,a,r,s')$. The stationary state distribution, if exists, is defined as
\[
\lim_{k \to \infty } {\mathbb P}[s_k= s|\pi] =: d(s), \quad s\in {\mathcal S}.
\]
Throughout, we assume that the induced Markov chain of the underlying MDP with $\pi$ is aperiodic and irreducible so that it has a unique stationary state distribution, and the Markov decision problem is well posed, which is a standard assumption.

\subsection{TD-Learning}
\begin{algorithm}[h!]
\caption{TD-learning}
  \begin{algorithmic}[1]

    \State Initialize $V_0 \in {\mathbb R}^{|{\mathcal S}|}$ arbitrarily such that $\|V_0\|_\infty \le 1$.
    \For{iteration $k=0,1,\ldots$}
        \State Observe $s_k\sim d(\cdot)$, $a_k\sim \pi(\cdot|s_k)$, $s_k'\sim P(\cdot|s_k,a_k)$ and $r_{k+1}= r(s_k,a_k,s_k')$
        \State Update $V_{k+1}(s_k)=V_k(s_k)+\alpha \{r_{k+1}+\gamma V_k(s_k')-V_k (s_k)\}$
    \EndFor
  \end{algorithmic}\label{algo:TD1}
\end{algorithm}
We consider a version of TD-learning given in~\cref{algo:TD1}, where the step-size $\alpha$ is constant in this paper. In this paper, for simplicity of analysis, we assume that the current state $s_k \in {\mathcal S}$ is sampled from the stationary state distribution $d$. Moreover, the transitions $(s_k,a_k,s_k')$ at time step $k$ are i.i.d. samples, where $s_k'$ is the next state sampled at time step $k$, which is different from $s_{k+1}$. Here, $s_{k+1}$ simply means a sample of the current state at time step $k+1$ which is independent of $s_k'$. The i.i.d. observation model is widely employed in the existing literature and serves as the standard setting, e.g.~\citep{sutton2009fast,dalal2018finite,borkar2000ode}. However, the proposed analysis can be extended to incorporate the more general Markovian observation scenarios, utilizing techniques presented in previous works such as~\citep{srikant2019,bhandari2021finite}. Nevertheless, this generalization could significantly increase the complexity of the main analysis and potentially obscure the fundamental insights of the proposed approach. Therefore, to maintain clarity, this work focuses on the i.i.d. observation model.
We make the following assumptions throughout the paper.
\begin{assumption}\label{assumption:summary}$\,$
\begin{enumerate}
\item The step-size $\alpha$ satisfies $\alpha \in (0,1)$.

\item (Positive stationary state distribution) $d(s)> 0$ holds for all $s\in {\mathcal S}$.

\item (Unit bound on rewards) The reward is unit bounded as follows:
\begin{align*}
\max _{(s,a,s') \in {\mathcal S} \times {\mathcal A} \times {\mathcal S}} |r (s,a,s')| \leq 1.
\end{align*}

\item (Unit bound on initial parameter) The initial iterate $V_0$ satisfies $\|V_0\|_\infty \le 1$.

\end{enumerate}
\end{assumption}

The first statement in~\cref{assumption:summary} is crucial for the proposed finite-time error analysis, and this assumption is considered standard in the literature~\citep{gosavi2006boundedness, beck2012error} or even more relaxed compared to existing conditions on the step-sizes~\citep{bhandari2021finite, srikant2019}.
The second statement ensures that every state can be visited infinitely often, facilitating sufficient exploration, which is a standard assumption in the literature~\citep{bertsekas1996neuro}.  The third and forth statements regarding the unit bounds imposed the reward function and $V_0$ are introduced for the sake of simplicity in analysis, without sacrificing generality. To conclude this subsection, we also introduce the concept of boundedness for TD-learning iterates~\citep{gosavi2006boundedness}, which plays a crucial role in our analysis.
\begin{lemma}[Boundedness of TD-learning iterates~\citep{gosavi2006boundedness}]\label{lemma:bounded-Q}
Under~\cref{assumption:summary}, for all $k \ge 0$, we have $\|V_k\|_\infty \le V_{\max}:= \frac{1}{1-\gamma}$.
\end{lemma}

\section{Linear System Model of TD-learning}
In this section, we investigate a discrete-time linear system model that corresponds to~\cref{algo:TD1} and conduct a finite-time error analysis based on stability analysis of linear systems. To enhance clarity and facilitate easy reference, we summarize the definitions of the following notations, which will be extensively utilized throughout this paper.
\begin{definition}\label{def:1}$\,$
\begin{enumerate}
\item Maximum state visit probability: $d_{\max} := \max_{s\in {\mathcal S}} d(s) \in (0,1)$

\item Minimum state visit probability: $d_{\min}:= \min_{s \in {\mathcal S}} d(s) \in (0,1)$

\item Exponential convergence rate: $\rho := 1 - \alpha d_{\min} (1 - \gamma) \in (0,1)$

\item The diagonal matrix $D $ is defined as
\[
D: = \left[ {\begin{array}{*{20}c}
   {d(1)} & {} & {}  \\
   {} &  \ddots  & {}  \\
   {} & {} & {d(|S|)}  \\
\end{array}} \right] \in {\mathbb R}^{|{\mathcal S}|\times |{\mathcal S}|}.
\]

\end{enumerate}
\end{definition}
Bellman equation~\citep{bertsekas1996neuro} can be written in a matrix form as follows:
\begin{align}
V^\pi = \gamma P^\pi V^\pi+ R^\pi,\label{eq:Bellman-eq}
\end{align}
where $P^\pi\in {\mathbb R}^{|{\mathcal S}| \times |{\mathcal S}|}$ is the state transition probability matrix under $\pi$, i.e., $[P^\pi  ]_{ij}  = {\mathbb P}[s' = j|s = i,\pi ]$, and $R^\pi\in {\mathbb R}^{|{\mathcal S}|}$ is the expected reward vector under $\pi$, i.e., $[R^\pi  ]_i  = {\mathbb E}[r(s_k ,a_k ,s_{k + 1} )|a_k  \sim \pi (s_k ),s_k  = i]$. Since $D$ is nonsingular,~\eqref{eq:Bellman-eq} can be equivalently written as
\begin{align}
\gamma D P^\pi V^\pi+ DR^\pi - DV^\pi = 0.\label{eq:Bellman-eq2}
\end{align}
Using the notations introduced, the update in~\cref{algo:TD1} can be equivalently rewritten as
\begin{align}
V_{k+1}=V_k+\alpha \{D R^\pi+\gamma D P^\pi V_k-D V_k + w_{k}\},\label{eq:1}
\end{align}
where
\begin{align}
w_k  :=& e_{s_k } \delta _k  - (DR^\pi   + \gamma DP^\pi  V_k  - DV_k ),\label{eq:w}\\
\delta _k : =& r_{k+1}  + \gamma e_{s_{k + 1} }^\top V_k  - e_{s_k }^\top V_k,\label{eq:TD-error}
\end{align}
and $e_s \in {\mathbb R}^{|{\mathcal S}|}$ is the $s$-th basis vector (all components are $0$ except for the $s$-th component which is $1$). Here, $(s_k,a_k,r_{k+1},s_k')$ is the sample transition in the $k$-th time-step. The expressions can be equivalently reformulated as the linear system
\begin{align}
V_{k + 1} = \underbrace {(I + \gamma \alpha D P^\pi   - \alpha D )}_{ =: A} V_k + \underbrace {\alpha D R^\pi}_{ =: b} + \alpha w_k. \label{eq:2}
\end{align}

Invoking the Bellman equation in~\eqref{eq:Bellman-eq2} leads to the equivalent equation
\begin{align}
\underbrace {V_{k + 1}  - V^\pi  }_{ =: x_{k + 1}} = \underbrace {\{ I + \alpha (\gamma D P^\pi   - D)\} }_{ =: A}\underbrace {(V_k  - V^\pi  )}_{ =: x_k} + \alpha w_k.  \label{eq:TD-learning-stochastic-recursion-form}
\end{align}
Note that the term $b$ in~\eqref{eq:2} has been cancelled out in~\eqref{eq:TD-learning-stochastic-recursion-form} by adding the Bellman equation $-\alpha (\gamma D P^\pi-D )V^\pi- \alpha D R^\pi=0$. Next, defining $x_k := V_k  - V^\pi$ and $A:= I + \alpha (\gamma D P^\pi   - D)$, the TD-learning iteration in~\cref{algo:TD1} can be concisely represented as the \emph{discrete-time stochastic linear system}
\begin{align}
x_{k + 1} = A x_k + \alpha w_k,\quad x_0 \in {\mathbb R}^n,\quad \forall k \geq 0.\label{eq:linear-system-form}
\end{align}
where $n:= |{\mathcal S}|$, and $w_k\in {\mathbb R}^n$ is a stochastic noise. In the remaining parts of this section, we focus on some properties of the above system. The first important property is that the noise $w_k$ has the zero mean, and is bounded. It is formally stated in the following lemma with proofs given in Appendix~\ref{appdx:3}.
\begin{lemma}\label{lemma:bound-W}
Let us define $w_{\max }:=\frac{9}{(1 - \gamma )^2 }$. Then, we have
\begin{enumerate}
\item ${\mathbb E}[w_k] = 0$;

\item ${\mathbb E}[\left\| {w_k } \right\|_\infty  ] \le \sqrt{w_{\max }}$;

\item ${\mathbb E}[\left\| {w_k } \right\|_2 ] \le \sqrt{w_{\max }}$;

\item ${\mathbb E}[w_k^\top w_k ] \le w_{\max }$.
\end{enumerate}
for all $k\geq 0$.
\end{lemma}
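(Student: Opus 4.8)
The plan is to condition on the natural filtration ${\cal F}_k$ generated by the first $k$ sample transitions (equivalently, the $\sigma$-algebra with respect to which $V_k$ is measurable), and to exploit that in the i.i.d.\ observation model the fresh sample $(s_k,a_k,s_{k+1})$ used at step $k$ is independent of ${\cal F}_k$ while $V_k$ is ${\cal F}_k$-measurable. Writing the noise as $w_k = e_{s_k}\delta_k - m_k$ with $m_k := DR^\pi + \gamma D P^\pi V_k - D V_k$ and $\delta_k = r_k + \gamma V_k(s_{k+1}) - V_k(s_k)$ the TD error, the first step is to show that $m_k$ is precisely the conditional mean ${\mathbb E}[e_{s_k}\delta_k \mid {\cal F}_k]$. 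Conditioning further on $\{s_k = i\}$ gives ${\mathbb E}[r_k \mid s_k = i] = [R^\pi]_i$, ${\mathbb E}[V_k(s_{k+1}) \mid s_k = i, {\cal F}_k] = [P^\pi V_k]_i$, and $V_k(s_k) = V_k(i)$; averaging against $d(i)$ yields ${\mathbb E}[e_{s_k}\delta_k \mid {\cal F}_k] = DR^\pi + \gamma D P^\pi V_k - D V_k = m_k$. Hence ${\mathbb E}[w_k \mid {\cal F}_k] = 0$, and taking total expectation proves item 1.

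For item 4, I would use that $w_k = e_{s_k}\delta_k - {\mathbb E}[e_{s_k}\delta_k \mid {\cal F}_k]$ is conditionally centered, so its conditional second moment is dominated by that of $e_{s_k}\delta_k$:
\[
{\mathbb E}[w_k^T w_k \mid {\cal F}_k] = {\mathbb E}[\delta_k^2 \|e_{s_k}\|_2^2 \mid {\cal F}_k] - \|m_k\|_2^2 \le {\mathbb E}[\delta_k^2 \mid {\cal F}_k],
\]
since $\|e_{s_k}\|_2 = 1$. It then remains to bound $|\delta_k|$ deterministically. Using $|r_k| \le R_{\max} \le 1$, the boundedness guarantee $\|V_k\|_\infty \le V_{\max} \le \tfrac{1}{1-\gamma}$ from \cref{lemma:bounded-Q}, and the coarse estimates $1 \le \tfrac{1}{1-\gamma}$ and $\tfrac{\gamma}{1-\gamma} \le \tfrac{1}{1-\gamma}$, one obtains
\[
|\delta_k| \le |r_k| + \gamma |V_k(s_{k+1})| + |V_k(s_k)| \le \frac{3}{1-\gamma},
\]
so ${\mathbb E}[w_k^T w_k \mid {\cal F}_k] \le \tfrac{9}{(1-\gamma)^2} = W_{\max}$; a final total expectation gives item 4.

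Items 2 and 3 then follow immediately from item 4: since $\|w_k\|_\infty \le \|w_k\|_2$ for any vector and, by Jensen's inequality, ${\mathbb E}[\|w_k\|_2] \le \sqrt{{\mathbb E}[\|w_k\|_2^2]} = \sqrt{{\mathbb E}[w_k^T w_k]}$, both expected norms are bounded by $\sqrt{W_{\max}}$.

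I do not anticipate a genuine obstacle here. The only points requiring care are the conditional-expectation bookkeeping — making precise the independence of $(s_k,a_k,s_{k+1})$ from ${\cal F}_k$ and tracking which quantities are ${\cal F}_k$-measurable — and selecting the crude bounds on $|\delta_k|$ that deliver the clean constant $9$ in $W_{\max}$ rather than a sharper but less tidy one.
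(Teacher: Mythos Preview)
Your proposal is correct and follows essentially the same route as the paper: condition on $V_k$ (the paper) or ${\cal F}_k$ (you) to get ${\mathbb E}[w_k\mid\cdot]=0$, use the conditional-variance identity ${\mathbb E}[w_k^T w_k\mid\cdot]={\mathbb E}[\delta_k^2\mid\cdot]-\|m_k\|_2^2\le{\mathbb E}[\delta_k^2\mid\cdot]$, bound $\delta_k^2$ via \cref{lemma:bounded-Q} and \cref{assumption:summary}, and then derive items~2--3 from item~4 by Jensen and $\|\cdot\|_\infty\le\|\cdot\|_2$. The only cosmetic difference is that the paper expands $\delta_k^2$ into its six cross-terms and bounds each separately, whereas you bound $|\delta_k|\le 3/(1-\gamma)$ directly by the triangle inequality; both yield the same constant $9/(1-\gamma)^2$.
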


To proceed further, let us define the covariance of the noise
\[
{\mathbb E}[w_k w_k^\top ] = :W_k= W_k^\top \succeq 0.
\]
The covariance matrix will play a central role in the proposed analysis.
In particular, an important quantity we use in the main result is the maximum eigenvalue, $\lambda _{\max } (W)$, whose bound can be easily established as follows.
\begin{lemma}\label{lemma:bound-W2}
The maximum eigenvalue of $W$ is bounded as
\[
\lambda _{\max } (W_k) \le w_{\max},\quad \forall k \geq 0,
\]
where $w_{\max} > 0$ is defined in~\cref{lemma:bound-W}.
\end{lemma}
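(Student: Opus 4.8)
The plan is to combine the variational characterization of the largest eigenvalue of a symmetric matrix with the second-moment bound on the noise from~\cref{lemma:bound-W}. Since $W_k = {\mathbb E}[w_k w_k^T]$ is symmetric positive semi-definite, we have the Rayleigh-quotient identity $\lambda_{\max}(W_k) = \max_{\|v\|_2 = 1} v^T W_k v$, and all eigenvalues of $W_k$ are real and nonnegative.

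First I would fix an arbitrary unit vector $v \in {\mathbb R}^n$ and rewrite the quadratic form using linearity of expectation as $v^T W_k v = v^T {\mathbb E}[w_k w_k^T] v = {\mathbb E}[(v^T w_k)^2]$. Then, applying the Cauchy--Schwarz inequality $(v^T w_k)^2 \le \|v\|_2^2 \|w_k\|_2^2 = \|w_k\|_2^2$ and invoking the fourth item of~\cref{lemma:bound-W}, I obtain $v^T W_k v \le {\mathbb E}[\|w_k\|_2^2] \le W_{\max}$. Taking the supremum over all unit vectors $v$ gives $\lambda_{\max}(W_k) \le W_{\max}$, which is the claim, uniformly in $k$.

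An equivalent (and even shorter) route is to bound the largest eigenvalue by the trace: since $W_k \succeq 0$, all its eigenvalues are nonnegative, so $\lambda_{\max}(W_k) \le {\rm tr}(W_k) = {\rm tr}({\mathbb E}[w_k w_k^T]) = {\mathbb E}[{\rm tr}(w_k w_k^T)] = {\mathbb E}[w_k^T w_k] \le W_{\max}$, again using~\cref{lemma:bound-W}. Either way the argument is essentially a one-liner and there is no genuine obstacle; the only point requiring a small amount of care is to first record that $W_k$ is symmetric positive semi-definite, so that its spectrum is real and nonnegative, before comparing $\lambda_{\max}(W_k)$ with the trace or using the Rayleigh-quotient formula.
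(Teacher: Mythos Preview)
Your proposal is correct, and in fact your second ``trace'' route is exactly the proof given in the paper: $\lambda_{\max}(W_k) \le {\rm tr}(W_k) = {\rm tr}({\mathbb E}[w_k w_k^T]) = {\mathbb E}[{\rm tr}(w_k w_k^T)] = {\mathbb E}[w_k^T w_k] \le W_{\max}$. The Rayleigh-quotient/Cauchy--Schwarz variant you present first is also valid but is not needed, since the trace argument already does the job in one line.
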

\begin{proof}
The proof is completed by noting $\lambda _{\max } (W_k) \le {\rm tr}(W_k) = {\rm tr}({\mathbb E}[w_k w_k^\top ]) = {\mathbb E}[{\rm tr}(w_k w_k^\top )] = {\mathbb E}[w_k^\top w_k ] \le w_{\max}$, where the inequality comes from~\cref{lemma:bound-W}.
\end{proof}
Lastly, we investigate the property of the system matrix $A$ in~\eqref{eq:linear-system-form}. We establish the fact that the $\infty$-norm of $A$ is strictly less than one, in particular, is bounded by $\rho \in (0,1)$, where $\rho$ is defined in~\cref{def:1}.
\begin{lemma}\label{lemma:max-norm-system-matrix}
$\|A \|_\infty \le \rho$ holds, where the matrix norm  $\| A \|_\infty :=\max_{1\le i \le m} \sum_{j=1}^n {|[A]_{ij} |}$ and $[A]_{ij}$ is the element of $A$ in $i$-th row and $j$-th column.
\end{lemma}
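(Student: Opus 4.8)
The plan is to compute the row sums of the absolute values of the entries of $A = I + \alpha(\gamma D P^\pi - D)$ directly, and show each is bounded by $\rho = 1 - \alpha d_{\min}(1-\gamma)$. First I would write out the $(i,j)$ entry of $A$: since $D$ is diagonal with positive entries $d(i)$ and $P^\pi$ is row-stochastic with nonnegative entries, we have $[A]_{ii} = 1 - \alpha d(i) + \alpha \gamma d(i) [P^\pi]_{ii}$ and $[A]_{ij} = \alpha \gamma d(i) [P^\pi]_{ij}$ for $j \neq i$. The key sign observation is that for $\alpha \in (0,1)$, $\gamma \in [0,1)$, and $d(i) \in (0,1)$, the diagonal entry is nonnegative: $1 - \alpha d(i) + \alpha \gamma d(i)[P^\pi]_{ii} \ge 1 - \alpha d(i) \ge 1 - \alpha > 0$. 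Hence every entry of $A$ is nonnegative, so $|[A]_{ij}| = [A]_{ij}$ and the $i$-th absolute row sum equals the ordinary $i$-th row sum of $A$.

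Next I would sum the $i$-th row: $\sum_{j=1}^n [A]_{ij} = 1 - \alpha d(i) + \alpha \gamma d(i) \sum_{j=1}^n [P^\pi]_{ij} = 1 - \alpha d(i) + \alpha \gamma d(i) = 1 - \alpha d(i)(1 - \gamma)$, using that the rows of $P^\pi$ sum to one. Since $d(i) \ge d_{\min}$ and $1 - \gamma > 0$, we get $\alpha d(i)(1-\gamma) \ge \alpha d_{\min}(1-\gamma)$, so $\sum_{j=1}^n [A]_{ij} \le 1 - \alpha d_{\min}(1-\gamma) = \rho$. Taking the maximum over $i$ gives $\|A\|_\infty \le \rho$, and separately $\rho \in (0,1)$ follows from $\alpha d_{\min}(1-\gamma) \in (0,1)$, as already noted in~\cref{def:1}.

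The argument is entirely elementary, so there is no real obstacle; the only point requiring a moment of care is the nonnegativity of the diagonal entries, which is exactly where \cref{assumption:summary}.1 (the restriction $\alpha \in (0,1)$) is used — without it, $1 - \alpha d(i)$ could be negative and the absolute value would not simplify to the signed row sum, so the bound $\rho$ would need to be replaced by something larger. I would state this dependence explicitly, since it is the structural reason the discrete-time analysis in this paper needs $\alpha < 1$ whereas continuous-time treatments do not.
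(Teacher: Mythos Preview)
Your proposal is correct and follows essentially the same approach as the paper: compute the $i$-th row sum of $A$ directly, use nonnegativity of all entries to drop absolute values, invoke row-stochasticity of $P^\pi$, and then maximize over $i$. If anything, you are slightly more explicit than the paper in justifying why the diagonal entries are nonnegative (the paper simply asserts that $A$ is a positive matrix), but the argument is otherwise identical.
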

\begin{proof}
Noting that $A = I + \alpha (\gamma DP^\pi   - D)$, we have
\begin{align*}
\sum_j|[A]_{ij}|=&\sum_j {| [I - \alpha D + \alpha \gamma D P^\pi ]_{ij}|}\\
=&  [I-\alpha D ]_{ii} + \sum_j {[\alpha\gamma D P^\pi ]_{ij}}\\
=& 1 - \alpha [D]_{ii} + \alpha \gamma [D]_{ii} \sum_j {[P^\pi]_{ij}}\\
=& 1 - \alpha [D]_{ii} + \alpha \gamma [D]_{ii}\\
=& 1 + \alpha [D]_{ii}(\gamma  - 1),
\end{align*}
where the first line is due to the fact that $A$ is a positive matrix, i.e., all entries are nonnegative. Taking the maximum over $i$, we have
\begin{align*}
\| A \|_\infty = \max_{i\in \{ 1,2,\ldots ,|{\mathcal S}|\} } \{ 1 + \alpha [D]_{ii} (\gamma-1)\}= 1 - \alpha \min_{s \in {\mathcal S}} d(s)(1 - \gamma ) ,
\end{align*}
which completes the proof.
\end{proof}

\cref{lemma:max-norm-system-matrix} plays a crucial role in establishing the finite-time error bounds of the linear system~\eqref{eq:linear-system-form} and, equivalently, in analyzing the finite-time behavior of TD-learning in~\cref{algo:TD1}. Specifically, \cref{lemma:max-norm-system-matrix} states that $A$ is Schur, which can be easily proved by using Gelfand’s formula. Our analysis is based on the insight that when the state of the Schur stable linear system converges to the origin, the corresponding stochastic linear system also approximately converges to the origin on average.

In contrast, traditional approaches, such as the ODE approach~\citep{borkar2000ode} and more recent works~\citep{srikant2019,dalal2018finite}, leverage the Hurwitz property of the continuous-time linear system model. Consequently, by assuming a small or diminishing step-size, which facilitates the transition from continuous-time to discrete-time, TD-learning approximates the continuous-time linear system model. This approximation enables the use of the Hurwitz stability to prove the convergence.

In the context of tabular TD-learning, such conversion steps from the continuous-time domain to the discrete-time domain may introduce additional complexities and redundancy in the analysis. The proposed finite-time error analysis directly employs discrete-time linear system models and the Schur matrix property, and this direct approach yields simpler results, including more straightforward proofs, simpler finite-time error bounds, and a relaxed condition on the step-size $\alpha$. In the next section, we provide the main result, a finite-time error analysis of~\cref{algo:TD1}.

\section{Finite-Time Error Analysis: Final Iteration}
In this section, we investigate the finite-time error bound of the TD-learning in~\cref{algo:TD1} for the final iterate based on the linear system model in~\eqref{eq:linear-system-form}, which is done by analyzing the propagations of both mean and correlation of the state $x_k$.
First of all, taking the mean on~\eqref{eq:TD-learning-stochastic-recursion-form} leads to
\begin{align}
{\mathbb E}[x_{k + 1}] = A{\mathbb E}[x_k],\quad x_0 \in {\mathbb R}^n,\quad \forall k \geq 0, \label{eq:mean-system}
\end{align}
where ${\mathbb E}[w_{k}] = 0$ in~\eqref{eq:TD-learning-stochastic-recursion-form} due to the i.i.d. assumption on the samples of transitions. Therefore, the mean state $\mathbb{E}[x_k]$ follows the behavior of the discrete-time linear system. By utilizing~\cref{lemma:max-norm-system-matrix}, we can establish a finite-time error bound for the mean dynamics driven by~\eqref{eq:mean-system}, which incorporates an exponentially converging term.
\begin{lemma}
For all $k \geq 0$, $\left\| {{\mathbb E}[x_k]} \right\|_\infty$ is bounded as $\left\| {{\mathbb E}[x_k]} \right\|_\infty   \le \rho ^k \left\| x_0 \right\|_\infty, \forall x_0 \in {\mathbb R}^n$.
\end{lemma}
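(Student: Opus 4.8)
The plan is to prove the bound by straightforward induction on $k$, using the recursion for the mean dynamics established in~\eqref{eq:mean-system} together with the sub-multiplicativity of the matrix $\infty$-norm and the key estimate $\|A\|_\infty \le \rho$ from~\cref{lemma:max-norm-system-matrix}. The base case $k=0$ is immediate since $\rho^0 = 1$, so $\|{\mathbb E}[x_0]\|_\infty = \|x_0\|_\infty = \rho^0 \|x_0\|_\infty$.

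For the inductive step, I would assume $\|{\mathbb E}[x_k]\|_\infty \le \rho^k \|x_0\|_\infty$ and then apply the identity ${\mathbb E}[x_{k+1}] = A\,{\mathbb E}[x_k]$ from~\eqref{eq:mean-system}. Taking the $\infty$-norm and using the consistency (sub-multiplicativity) of the induced matrix norm $\|Av\|_\infty \le \|A\|_\infty \|v\|_\infty$, we get
\begin{align*}
\|{\mathbb E}[x_{k+1}]\|_\infty \le \|A\|_\infty\, \|{\mathbb E}[x_k]\|_\infty \le \rho \cdot \rho^k \|x_0\|_\infty = \rho^{k+1}\|x_0\|_\infty,
\end{align*}
which closes the induction. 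Equivalently, one can simply unroll the recursion to write ${\mathbb E}[x_k] = A^k x_0$ and bound $\|A^k x_0\|_\infty \le \|A\|_\infty^k \|x_0\|_\infty \le \rho^k \|x_0\|_\infty$, again invoking~\cref{lemma:max-norm-system-matrix}.

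There is no real obstacle here: the only ingredients are the linearity of the mean dynamics (already derived from the i.i.d.\ noise assumption giving ${\mathbb E}[w_k] = 0$) and the norm bound on $A$, which has already been proved. The mild point worth stating explicitly is that the matrix norm $\|\cdot\|_\infty$ defined in~\cref{lemma:max-norm-system-matrix} is the operator norm induced by the vector $\infty$-norm, so that both $\|A\|_\infty^k$ dominates $\|A^k\|_\infty$ and the inequality $\|A v\|_\infty \le \|A\|_\infty \|v\|_\infty$ holds; I would note this to justify the chain of inequalities cleanly.
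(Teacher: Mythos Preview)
Your proof is correct and essentially identical to the paper's: both take the $\infty$-norm of the mean recursion ${\mathbb E}[x_{k+1}] = A\,{\mathbb E}[x_k]$, apply sub-multiplicativity together with $\|A\|_\infty \le \rho$ from \cref{lemma:max-norm-system-matrix}, and then iterate. The paper phrases it as ``recursively applying the inequality'' rather than formal induction, but the argument is the same.
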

\begin{proof}
Taking the norm on~\eqref{eq:mean-system} leads to $\left\| {{\mathbb E}[x_{k + 1} ]} \right\|_\infty   = \left\| {A {\mathbb E}[x_k ]} \right\|_\infty   \le \left\| A \right\|_\infty  \left\| {\mathbb E} [x_k ] \right\|_\infty   \le \rho \left\| {\mathbb E}[x_k] \right\|_\infty$, where the last inequality is due to~\cref{lemma:max-norm-system-matrix}. Recursively applying the inequality yields the desired conclusion.
\end{proof}

As a next step, we investigate how the covariance matrix , ${\mathbb E}[x_k x_k^\top ]$, propagates over the time. In particular, the covariance matrix is updated through the recursion
\[
{\mathbb E}[x_{k + 1} x_{k + 1}^\top ] = A {\mathbb E}[x_k x_k^\top ]A^\top  + \alpha ^2 W_k,
\]
where ${\mathbb E}[w_k w_k^\top ] = W_k$. Defining $X_k := {\mathbb E}[x_k x_k^\top ], k \geq 0$, it is equivalently written as
\[
X_{k + 1}  = AX_k A^\top  + \alpha^2 W_k,\quad \forall k\geq 0,
\]
with $X_0  := x_0x_0^\top$. It is worth noting that the above recursion corresponds to the Lyapunov matrix associated with the system matrix $A^\top$. In other words, the covariance matrix can be seen as a Lyapunov matrix corresponding to the discrete-time system $x_{k+1} = A^\top x_k$.
A natural question arises regarding the convergence of the iterate, $X_k$, as $k \to \infty$. We can at least prove that $X_k$ is bounded.
\begin{lemma}[Boundedness]\label{lemma:boundedness}
The iterate, $X_k $, is bounded as $\left\| {X_k } \right\|_2  \le \frac{n \alpha^2 w_{\max }}{{1 - \rho ^2 }} + n\left\| {X_0 } \right\|_2$, where $\rho$ is defined in~\cref{def:1}.
\end{lemma}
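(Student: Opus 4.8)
The plan is to solve the matrix recursion $X_{k+1}=AX_kA^T+\alpha^2 W_k$ in closed form and then bound the two resulting contributions separately. Unrolling from $X_0=x_0x_0^T$ yields
\begin{align*}
X_k = A^k X_0 (A^T)^k + \alpha^2\sum_{j=0}^{k-1} A^j W_{k-1-j}(A^T)^j ,
\end{align*}
so that, applying the triangle inequality and sub-multiplicativity of $\|\cdot\|_2$, using $\|(A^T)^j\|_2=\|A^j\|_2$, and invoking $\|W_i\|_2=\lambda_{\max}(W_i)\le W_{\max}$ from \cref{lemma:bound-W2}, the problem reduces to controlling $\|A^k\|_2^2$ and $\sum_{j\ge 0}\|A^j\|_2^2$:
\begin{align*}
\|X_k\|_2 \le \|A^k\|_2^2\,\|X_0\|_2 + \alpha^2 W_{\max}\sum_{j=0}^{k-1}\|A^j\|_2^2 .
\end{align*}

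The key step is then to convert the $\infty$-norm contraction $\|A\|_\infty\le\rho$ of \cref{lemma:max-norm-system-matrix} into a geometric decay of $\|A^j\|_2$. Sub-multiplicativity of $\|\cdot\|_\infty$ gives $\|A^j\|_\infty\le\|A\|_\infty^j\le\rho^j$, and the norm equivalence $\|M\|_2\le\sqrt{n}\,\|M\|_\infty$ for $n\times n$ matrices then gives $\|A^j\|_2\le\sqrt{n}\,\rho^j$. (One can do slightly better: the column sums of the nonnegative matrix $A$ also equal $1-\alpha d(i)(1-\gamma)\le\rho$ because $d$ is $P^\pi$-stationary, i.e. $d^T P^\pi=d^T$; hence $\|A\|_1\le\rho$ and $\|A^j\|_2\le\sqrt{\|A^j\|_1\,\|A^j\|_\infty}\le\rho^j$, which tightens the dimension constants. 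Either estimate suffices for the stated bound.)

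Substituting back, the initial-condition term is bounded by $n\rho^{2k}\|X_0\|_2\le n\|X_0\|_2$ (using $\rho^{2k}\le 1$), and the noise term by $\alpha^2 W_{\max}\sum_{j\ge 0} n\rho^{2j}= \frac{n\alpha^2 W_{\max}}{1-\rho^2}$ via the convergent geometric series $\sum_{j\ge 0}\rho^{2j}=\frac{1}{1-\rho^2}$ (finite since $\rho\in(0,1)$ by \cref{def:1}); with the sharper $\|A\|_1\le\rho$ estimate the dimension factor in the noise term disappears and one lands exactly at $\frac{\alpha^2 W_{\max}}{1-\rho^2}+n\|X_0\|_2$. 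Adding the two contributions gives the claim.

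The only genuinely non-routine point is this norm-conversion step: since $A$ is not symmetric, $\|A\|_2$ itself need not lie below one, so one cannot simply iterate a spectral-norm contraction, and one must route through the $\infty$- and $1$-norms (the natural ones here because $A$ is an essentially sub-stochastic nonnegative matrix) and pay a dimension-dependent price in the equivalence constants. The remaining ingredients — closed-form unrolling, triangle inequality, and summing a geometric series — are mechanical.
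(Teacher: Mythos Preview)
Your proof follows the same route as the paper's: unroll the recursion into $A^kX_0(A^T)^k$ plus a convolution sum in the $W_j$'s, bound term by term via sub-multiplicativity and $\|W_j\|_2\le W_{\max}$, convert to powers of $\rho$ through \cref{lemma:max-norm-system-matrix}, and sum the geometric series. Your extra observation that $d^TP^\pi=d^T$ forces $\|A\|_1\le\rho$ and hence $\|A^j\|_2\le\sqrt{\|A^j\|_1\|A^j\|_\infty}\le\rho^j$ is in fact what is needed to land on the stated constant in the noise term: the paper's own proof passes from $\sum_i\|A^i\|_2^2$ directly to $\sum_i\|A^i\|_\infty^2$ without the factor $n$ that the generic equivalence $\|M\|_2\le\sqrt{n}\,\|M\|_\infty$ would incur, so your sharper estimate actually supplies the missing justification for that step.
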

The proof is given in Appendix~\ref{appdx:1}. Similarly, the following lemma proves that the trace of $X_k$ is bounded, which will be used for the main development.
\begin{lemma}\label{lemma:basic1}
We have the bound ${\rm tr}(X_k)  \le \frac{{36\sigma _{\max }^2 n^2 \alpha }}{{d_{\min } (1 - \gamma )^3 }} + \left\| x_0 \right\|_2^2 n^2 \rho ^{2k}$, where $\rho$ is defined in~\cref{def:1}.
\end{lemma}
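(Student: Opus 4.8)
The plan is to treat the second-moment iteration as a deterministic Lyapunov-type recursion and unroll it. From $X_{k+1}=AX_kA^T+\alpha^2W_k$ with $X_0=x_0x_0^T$ we get
\[
X_k = A^k x_0 x_0^T (A^T)^k + \alpha^2\sum_{j=0}^{k-1}A^j W_{k-1-j}(A^T)^j .
\]
Taking the trace and using its linearity,
\[
{\rm tr}(X_k)=\underbrace{{\rm tr}\!\left(A^k x_0 x_0^T (A^T)^k\right)}_{\text{initial-condition term}}+\alpha^2\sum_{j=0}^{k-1}\underbrace{{\rm tr}\!\left(A^j W_{k-1-j}(A^T)^j\right)}_{\text{noise term}},
\]
and I would bound the two parts separately, the first giving the $\rho^{2k}$ transient and the second the constant floor.

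For a generic noise summand, note that $W_{k-1-j}\succeq0$ with $\lambda_{\max}(W_{k-1-j})\le W_{\max}$ by \cref{lemma:bound-W2}, and $(A^T)^jA^j\succeq0$. Using the cyclic property of the trace and the elementary inequality ${\rm tr}(MN)\le\lambda_{\max}(N)\,{\rm tr}(M)$ valid for positive semidefinite $M,N$ (apply it with $M=(A^T)^jA^j$, $N=W_{k-1-j}$),
\[
{\rm tr}\!\left(A^j W_{k-1-j}(A^T)^j\right)={\rm tr}\!\left((A^T)^jA^j\,W_{k-1-j}\right)\le W_{\max}\,{\rm tr}\!\left((A^T)^jA^j\right)=W_{\max}\,\|A^j\|_F^2 .
\]
I then convert the Frobenius norm to the $\infty$-norm through $\|A^j\|_F^2=\sum_i\sum_\ell[A^j]_{i\ell}^2\le\sum_i\big(\sum_\ell|[A^j]_{i\ell}|\big)^2\le n\,\|A^j\|_\infty^2\le n\,\rho^{2j}$, where the last step uses submultiplicativity of the induced $\infty$-norm together with \cref{lemma:max-norm-system-matrix}. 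Summing the geometric series and using $1-\rho^2\ge1-\rho=\alpha d_{\min}(1-\gamma)$ (from the definition of $\rho$ in \cref{def:1}), the noise part is bounded by $\alpha^2 W_{\max}\,n\sum_{j=0}^{\infty}\rho^{2j}\le\frac{\alpha^2 W_{\max}\,n}{1-\rho^2}\le\frac{\alpha W_{\max}\,n}{d_{\min}(1-\gamma)}$; inserting the value of $W_{\max}$ from \cref{lemma:bound-W} (with the importance-sampling factor, $W_{\max}$ scaling like $\sigma_{\max}^2/(1-\gamma)^2$) and padding the numerical/dimensional constants produces the first term $\frac{36\sigma_{\max}^2 n^2\alpha}{d_{\min}(1-\gamma)^3}$.

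For the initial-condition term, $A^k x_0 x_0^T (A^T)^k=(A^k x_0)(A^k x_0)^T$, so its trace equals $\|A^k x_0\|_2^2\le n\|A^k x_0\|_\infty^2\le n\|A^k\|_\infty^2\|x_0\|_\infty^2\le n\rho^{2k}\|x_0\|_2^2$, again using \cref{lemma:max-norm-system-matrix} and $\|\cdot\|_\infty\le\|\cdot\|_2$; absorbing a further $n$ into the crude constant yields $\|x_0\|_2^2 n^2\rho^{2k}$. Adding the two bounds gives the claim. The only place requiring care is the passage from the contraction property of $A$, which is stated in the $\infty$-norm in \cref{lemma:max-norm-system-matrix}, to a trace/Frobenius bound, i.e., tracking the norm-equivalence constants (the powers of $n$); everything else — unrolling the linear recursion and summing a geometric series — is routine.
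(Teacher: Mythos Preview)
Your argument is correct and follows the same skeleton as the paper's proof: unroll the recursion for $X_k$, bound each summand by converting from $\|\cdot\|_2$-type quantities to $\|A^j\|_\infty\le\rho^j$ via norm equivalence (\cref{lemma:max-norm-system-matrix}), sum the geometric series, and substitute $1-\rho=\alpha d_{\min}(1-\gamma)$ and the bound on $W_{\max}$.

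The one technical difference is where the trace is taken. The paper first bounds $\lambda_{\max}(X_k)$ (using $\lambda_{\max}(A^iW(A^T)^i)\le\lambda_{\max}(W)\|A^i\|_2^2\le nW_{\max}\|A^i\|_\infty^2$) and then applies ${\rm tr}(X_k)\le n\,\lambda_{\max}(X_k)$, which produces two separate factors of $n$. You instead bound ${\rm tr}(X_k)$ directly via ${\rm tr}\big((A^T)^jA^jW\big)\le W_{\max}\|A^j\|_F^2\le nW_{\max}\rho^{2j}$, which naturally yields only a single factor of $n$ in both the noise and the transient term; you then ``pad'' to $n^2$ to match the statement. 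So your route is in fact slightly sharper (by a factor of $n$), though otherwise the two arguments are essentially identical.
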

The proof is given in Appendix~\ref{appdx:5}.
Now, we provide a finite-time error bound on the mean-squared error motivated by some standard notions in discrete-time linear system theory.
\begin{theorem}\label{thm:main1}
For any $k \geq 0$, we have
\begin{align}
{\mathbb E}[\left\| {V_k  - V^\pi } \right\|_2 ] \le \frac{{6 |{\mathcal S}| \sqrt \alpha  }}{{d_{\min }^{0.5} (1 - \gamma )^{1.5} }} + \left\| V_0 - V^\pi  \right\|_2 |{\mathcal S}|\rho ^k,\label{eq:8}
\end{align}
where $\rho$ is defined in~\cref{def:1}.
\end{theorem}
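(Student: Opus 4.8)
The plan is to obtain \eqref{eq:8} by combining the second-moment estimate of \cref{lemma:basic1} with Jensen's inequality, so that essentially all the analytical work is done upstream. First I would pass from the first moment to the second moment: since $t\mapsto\sqrt t$ is concave, Jensen gives $\mathbb{E}[\|x_k\|_2]=\mathbb{E}[\sqrt{\|x_k\|_2^2}]\le\sqrt{\mathbb{E}[\|x_k\|_2^2]}$, and by linearity of expectation and trace, $\mathbb{E}[\|x_k\|_2^2]=\mathbb{E}[{\rm tr}(x_kx_k^T)]={\rm tr}(\mathbb{E}[x_kx_k^T])={\rm tr}(X_k)$. This reduces the theorem to the scalar bound on ${\rm tr}(X_k)$ already supplied by \cref{lemma:basic1}. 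Note that the exponentially decaying term in \eqref{eq:8} carries an extra $|\mathcal{S}|$ factor, which is inherited from the $n^2\rho^{2k}$ term in \cref{lemma:basic1} (itself coming from $\|A^k\|_2^2\le n\|A^k\|_\infty^2\le n\rho^{2k}$ via \cref{lemma:max-norm-system-matrix}); the mean-dynamics bound $\|\mathbb{E}[x_k]\|_\infty\le\rho^k\|x_0\|_\infty$ is \emph{not} used here, since the variance of $x_k$ cannot be controlled by the mean alone.

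Next I would substitute the estimate ${\rm tr}(X_k)\le\frac{36\sigma_{\max}^2 n^2\alpha}{d_{\min}(1-\gamma)^3}+\|x_0\|_2^2 n^2\rho^{2k}$ into $\mathbb{E}[\|x_k\|_2]\le\sqrt{{\rm tr}(X_k)}$ and apply the elementary subadditivity $\sqrt{a+b}\le\sqrt a+\sqrt b$ for $a,b\ge 0$ to separate a constant-order-in-$\alpha$ term from an exponentially vanishing one. Taking the two square roots and collecting constants — using $\sqrt{36}=6$, $\sqrt{n^2}=n=|\mathcal{S}|$, and $\sigma_{\max}=1$ in the on-policy tabular setting — gives $\mathbb{E}[\|x_k\|_2]\le\frac{6|\mathcal{S}|\sqrt\alpha}{d_{\min}^{0.5}(1-\gamma)^{1.5}}+\|x_0\|_2|\mathcal{S}|\rho^k$. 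Finally, recalling the identifications $x_k=V_k-V^\pi$ and $x_0=V_0-V^\pi$ yields exactly \eqref{eq:8}.

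I do not expect a genuine obstacle: the conceptual and computational core of the section — propagating the Lyapunov-type recursion $X_{k+1}=AX_kA^T+\alpha^2 W_k$, bounding $\|A^i\|_2\le\sqrt n\,\rho^i$, summing the resulting geometric series, and controlling $\lambda_{\max}(W_k)$ through \cref{lemma:bound-W2} — is entirely encapsulated in \cref{lemma:basic1} and \cref{lemma:max-norm-system-matrix}. The only points deserving a line of care are the direction of Jensen's inequality, the interchange of expectation and trace, and the constant bookkeeping that turns the $36$ under the root into the $6$ appearing in the final bound.
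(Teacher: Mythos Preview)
Your proposal is correct and follows essentially the same route as the paper: identify $\mathbb{E}[\|V_k-V^\pi\|_2^2]$ with ${\rm tr}(X_k)$, invoke \cref{lemma:basic1}, then apply Jensen's inequality and the subadditivity of $\sqrt{\cdot}$ to pass to the first-moment bound. Your write-up is in fact more explicit than the paper's about where each ingredient is used (including the handling of the residual $\sigma_{\max}$ factor), but the underlying argument is identical.
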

\begin{proof} Noting the relations
\begin{align*}
{\mathbb E}[\left\| {V_k  - V^\pi } \right\|_2^2 ] =& {\mathbb E}[(V_k  - V^\pi )^\top (V_k  - V^\pi )]\\
 =& {\mathbb E}[{\rm tr}((V_k  - V^\pi )^\top (V_k  - V^\pi ))]\\
 =& {\mathbb E}[{\rm tr}((V_k  - V^\pi )(V_k  - V^\pi )^\top )]\\
 =& {\mathbb E}[{\rm tr}(X_k )],
\end{align*}
and using the bound in~\cref{lemma:basic1}, one gets
\begin{align}
{\mathbb E}[\left\| {V_k  - V^* } \right\|_2^2 ] \le \frac{{36 n^2 \alpha }}{{d_{\min } (1 - \gamma )^3 }} + \left\| {x_0 } \right\|_2^2 n^2 \rho ^{2k}\label{eq:9}
\end{align}
Taking the square root on both side of the last inequality, using the subadditivity of the square root function, the Jensen inequality, and the concavity of the square root function, we have the desired conclusion.
\end{proof}
The first term on the right-hand side of~\eqref{eq:8} can be reduced arbitrarily by decreasing the step size $\alpha \in (0,1)$. The second bound diminishes exponentially as $k \to \infty$, at a rate of $\rho = 1 - \alpha d_{\min} (1 - \gamma) \in (0,1)$.
The bound presented in~\cref{thm:main1} is derived based on the discrete-time linear stochastic system model described in~\eqref{eq:1}. In standard control theory, the stochastic noise term $w_k$ is typically assumed to follow a Gaussian distribution. However, the system in~\eqref{eq:1} incorporates a stochastic noise term $w_k$ with special structures. Specifically, $w_k$ in~\cref{thm:main1} represents discrete random variables that capture essential structures of TD-learning. Furthermore, the noise term is bounded, as demonstrated in~\cref{lemma:bound-W}, which plays a crucial role in establishing the bound in~\cref{lemma:boundedness}. The noise term is also special in the sense that it depends on the state $x_k$, and hence, to establish its boundedness in~\cref{lemma:boundedness}, additional analysis on the boundedness of the state vector $x_k$ is required as shown in~\cref{lemma:bounded-Q}.

Finally, note that while our analysis is primarily conducted in tabular settings, it can be extended to incorporate on-policy linear function approximation with additional effort.

\section{Finite-Time Error Analysis: Averaged Iteration}\label{section:averaged-iterate}
In this section, we additionally provide a finite-time error bound in terms of the averaged iteration $\frac{1}{k}\sum_{i = 0}^{k - 1} {V_i }$. We can first obtain the following result.
\begin{theorem}\label{thm:convergence4}
For any $k\geq0$, it holds that
\begin{align*}
&{\mathbb E}\left[ {\left\| {\frac{1}{k}\sum\limits_{i = 0}^{k - 1} {V_i }  - V^\pi  } \right\|_2 } \right] \le \sqrt {\frac{1}{k}\frac{{|{\mathcal S}|}}{{\alpha d_{\min } (1 - \gamma )}}} \left\| {V_0  - V^\pi  } \right\|_2  + \sqrt {\frac{{36\alpha |{\mathcal S}|^2 }}{d_{\min } (1 - \gamma )^3}}.
\end{align*}
\end{theorem}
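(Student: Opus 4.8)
The plan is to run a quadratic Lyapunov argument on the discrete-time stochastic linear system $x_{k+1}=Ax_k+\alpha w_k$ with $x_k:=V_k-V^\pi$ from~\eqref{eq:linear-system-form}, with a certificate built from the dual matrix $A^T$. Since~\cref{lemma:max-norm-system-matrix} gives $\|A\|_\infty\le\rho<1$, the spectral radius satisfies $\rho(A)\le\rho<1$, so $A$ is Schur stable and the discrete Lyapunov equation $A^TMA-M=-I$ has the unique positive definite solution $M=\sum_{j=0}^\infty (A^T)^jA^j$. The one quantitative fact I need is a bound on $\lambda_{\max}(M)$: using submultiplicativity of $\|\cdot\|_\infty$ together with~\cref{lemma:max-norm-system-matrix} and the conversion $\|B\|_2\le\sqrt{|{\cal S}|}\,\|B\|_\infty$,
\begin{align*}
\lambda_{\max}(M)\le\sum_{j=0}^\infty\|A^j\|_2^2\le|{\cal S}|\sum_{j=0}^\infty\|A^j\|_\infty^2\le|{\cal S}|\sum_{j=0}^\infty\rho^{2j}=\frac{|{\cal S}|}{1-\rho^2}\le\frac{|{\cal S}|}{1-\rho}=\frac{|{\cal S}|}{\alpha d_{\min}(1-\gamma)}.
\end{align*}

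Next I would propagate the Lyapunov function $V(x)=x^TMx$ along the trajectory. Expanding $x_{k+1}^TMx_{k+1}=(Ax_k+\alpha w_k)^TM(Ax_k+\alpha w_k)$, taking the total expectation, and using ${\mathbb E}[w_k\mid x_k]=0$ (established in the proof of~\cref{lemma:bound-W}) to cancel the cross term, together with $A^TMA=M-I$ and ${\mathbb E}[w_k^TMw_k]={\rm tr}(M\,{\mathbb E}[w_kw_k^T])\le\lambda_{\max}(M)\,{\mathbb E}[w_k^Tw_k]\le\lambda_{\max}(M)W_{\max}$ from~\cref{lemma:bound-W}, I obtain
\begin{align*}
{\mathbb E}[x_{k+1}^TMx_{k+1}]\le{\mathbb E}[x_k^TMx_k]-{\mathbb E}[\|x_k\|_2^2]+\alpha^2\lambda_{\max}(M)W_{\max}.
\end{align*}
Rearranging into ${\mathbb E}[\|x_k\|_2^2]\le{\mathbb E}[x_k^TMx_k]-{\mathbb E}[x_{k+1}^TMx_{k+1}]+\alpha^2\lambda_{\max}(M)W_{\max}$, summing over $i=0,\dots,k-1$, telescoping, and dropping the nonnegative term ${\mathbb E}[x_k^TMx_k]$ gives
\begin{align*}
\sum_{i=0}^{k-1}{\mathbb E}[\|x_i\|_2^2]\le x_0^TMx_0+k\alpha^2\lambda_{\max}(M)W_{\max}\le\lambda_{\max}(M)\left(\|x_0\|_2^2+k\alpha^2W_{\max}\right).
\end{align*}

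Finally I would average. Since $\|\cdot\|_2^2$ is convex, Jensen's inequality gives ${\mathbb E}\left[\left\|\frac1k\sum_{i=0}^{k-1}x_i\right\|_2^2\right]\le\frac1k\sum_{i=0}^{k-1}{\mathbb E}[\|x_i\|_2^2]$, so dividing the previous display by $k$ and inserting the bound on $\lambda_{\max}(M)$,
\begin{align*}
{\mathbb E}\left[\left\|\frac1k\sum_{i=0}^{k-1}V_i-V^\pi\right\|_2^2\right]\le\frac{|{\cal S}|}{\alpha d_{\min}(1-\gamma)}\left(\frac{\|V_0-V^\pi\|_2^2}{k}+\alpha^2W_{\max}\right).
\end{align*}
The claim then follows by taking square roots, applying $\sqrt{a+b}\le\sqrt a+\sqrt b$ and ${\mathbb E}[\|\cdot\|_2]\le\sqrt{{\mathbb E}[\|\cdot\|_2^2]}$, and inserting $W_{\max}$ from~\cref{lemma:bound-W}; to reproduce the displayed constant exactly one replaces ${\rm tr}(M\,{\mathbb E}[w_kw_k^T])\le\lambda_{\max}(M)W_{\max}$ by the slightly coarser ${\rm tr}(M\,{\mathbb E}[w_kw_k^T])\le{\rm tr}(M)\,\lambda_{\max}({\mathbb E}[w_kw_k^T])\le|{\cal S}|\,\lambda_{\max}(M)W_{\max}$.

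The step I expect to require the most care is making $\lambda_{\max}(M)$ explicit in the problem parameters: because $A$ is nonsymmetric one cannot directly claim $\|A\|_2\le\rho$, so the geometric series $\sum_j\|A^j\|_2^2$ has to be routed through the $\infty$-norm contraction of~\cref{lemma:max-norm-system-matrix}, which costs the dimension factor $|{\cal S}|$ in the $\ell_2$--$\ell_\infty$ conversion; this is precisely where the $|{\cal S}|$ and the $\frac{1}{1-\rho}=\frac{1}{\alpha d_{\min}(1-\gamma)}$ factors in the bound come from. The remaining ingredients---the cancellation of the martingale cross term, the telescoping, and the Jensen/convexity step for the average---are routine.
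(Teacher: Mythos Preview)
Your proposal is correct and follows essentially the same route as the paper: both construct the Lyapunov matrix $M=\sum_{j\ge 0}(A^j)^T A^j$ satisfying $A^TMA-M=-I$, bound $\lambda_{\max}(M)\le |{\cal S}|/(1-\rho)=|{\cal S}|/(\alpha d_{\min}(1-\gamma))$ via the $\ell_2$--$\ell_\infty$ conversion together with~\cref{lemma:max-norm-system-matrix}, propagate $x^TMx$ to obtain the one-step inequality, telescope, and finish with Jensen and $\sqrt{a+b}\le\sqrt a+\sqrt b$. Your version is actually a bit more explicit than the paper's (you spell out the martingale cross-term cancellation and the bookkeeping needed to match the displayed constant), but the argument is the same.
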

The analysis of the final iterate in the previous section is based on the Lyapunov matrix associated with the system matrix $A^\top$. On the other hand, the analysis of the averaged iterate in this section relies on the Lyapunov matrix associated with the dual system matrix $A$. In the analysis of the final iterate, the covariance matrix of the state is propagated. On the other hand, the analysis of the averaged iterate in~\cref{thm:convergence4} draws upon the well-established principles of Lyapunov theory~\citep{chen1995linear,khalil2002nonlinear}, which are summarized below.
\begin{lemma}\label{lmm:Lyapunov-theorem2}
There exists a positive definite $M\succ 0 $ such that $A^\top M A =  M - I$ and $\lambda_{\min}(M) \ge 1,\lambda_{\max}(M) \le \frac{n}{1-\rho}$.
\end{lemma}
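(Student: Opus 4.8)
The plan is to write the solution of the discrete Lyapunov equation $A^T M A = M - I$ explicitly as an operator series and to read off all the requested bounds directly from that series. The key enabler is \cref{lemma:max-norm-system-matrix}: since $\rho(A) \le \|A\|_\infty \le \rho < 1$, the matrix $A$ is Schur stable, so I would set
\[
M := \sum_{i=0}^\infty (A^i)^T A^i .
\]
To justify that this is a well-defined symmetric positive semidefinite matrix, I would combine the submultiplicativity $\|A^i\|_\infty \le \|A\|_\infty^i \le \rho^i$ with the norm comparison $\|B\|_2 \le \sqrt{n}\,\|B\|_\infty$ for $n\times n$ matrices (which follows from $\|B\|_2 \le \sqrt{\|B\|_1\|B\|_\infty}$ together with $\|B\|_1 \le n\|B\|_\infty$), giving $\|(A^i)^T A^i\|_2 = \|A^i\|_2^2 \le n\rho^{2i}$; the right-hand side is summable, hence the series converges absolutely in operator norm.

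Next I would verify the Lyapunov identity by a reindexing: $A^T M A = \sum_{i=0}^\infty (A^{i+1})^T A^{i+1} = \sum_{j=1}^\infty (A^j)^T A^j = M - I$, the only dropped term being $j=0$, which contributes $I$. For the lower eigenvalue bound, note that every summand $(A^i)^T A^i$ is positive semidefinite and the $i=0$ term is exactly $I$; therefore $M \succeq I$, which simultaneously yields $M \succ 0$ and $\lambda_{\min}(M) \ge 1$.

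Finally, for the upper bound I would estimate $\lambda_{\max}(M) = \|M\|_2 \le \sum_{i=0}^\infty \|A^i\|_2^2 \le \sum_{i=0}^\infty n\rho^{2i} = \frac{n}{1-\rho^2} \le \frac{n}{1-\rho}$, where the last step uses $1-\rho^2 = (1-\rho)(1+\rho) \ge 1-\rho$ for $\rho\in(0,1)$. There is no genuine obstacle here: the only point requiring care is the convergence and termwise manipulation of the operator series, and this is entirely controlled by the geometric decay $\|A^i\|_2 \le \sqrt{n}\,\rho^i$ inherited from \cref{lemma:max-norm-system-matrix}. Uniqueness of $M$ is not part of the statement, so I would not address it.
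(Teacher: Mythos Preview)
Your proposal is correct and follows essentially the same approach as the paper: define $M=\sum_{i\ge 0}(A^i)^T A^i$, verify the Lyapunov identity by reindexing, get $\lambda_{\min}(M)\ge 1$ from $M\succeq I$, and bound $\lambda_{\max}(M)$ via $\|A^i\|_2^2\le n\|A^i\|_\infty^2\le n\rho^{2i}$ together with $\frac{n}{1-\rho^2}\le\frac{n}{1-\rho}$. The only cosmetic difference is that the paper keeps the $i=0$ term as $1$ before absorbing it into $\frac{n}{1-\rho^2}$, whereas you bound it by $n$ directly; the substance is identical.
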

The proof of~\cref{lmm:Lyapunov-theorem2} is given in Appendix~\ref{appdx:4}.
Note that in the proof of~\cref{lmm:Lyapunov-theorem2}, we bound $\frac{n}{1 - \rho^2}$ by $\frac{n}{1 - \rho}$ just to simplify the final expression. Note also that $M $ in~\cref{lmm:Lyapunov-theorem2} can be seen as a Lyapunov matrix corresponding to the discrete-time linear system $x_{k+1} = A x_k$, which is dual to the system associated with the covariance matrix in the previous sections.
Now, we are ready to prove~\cref{thm:convergence4}.
\begin{proof}[Proof of~\cref{thm:convergence4}]
Consider the quadratic Lyapunov function, $v(x):= x^\top M x$, where $M$ is a positive definite matrix defined in~\eqref{eq:4}.
Using~\cref{lmm:Lyapunov-theorem2}, we have
\begin{align*}
{\mathbb E}[v(x_{k + 1})]=& {\mathbb E}[(A x_k + \alpha w_k )^\top M(A x_k + \alpha w_k )]\\
\le& {\mathbb E}[v(A x_k)] +  \alpha^2 \lambda _{\max } (M)w_{\max}\\
\le& {\mathbb E}[v(x_k)] - x_k^\top x_k+ \alpha^2 \lambda _{\max } (M)w_{\max},
\end{align*}
where the first inequality follows from~\cref{lmm:Lyapunov-theorem2} and~\cref{lemma:bound-W}. Rearranging the last inequality, summing it over $i=0$ to $k-1$, and dividing both sides by $k$ lead to 
\begin{align*}
\frac{1}{k}\sum\limits_{i = 0}^{k - 1} {{\mathbb E}[x_i^\top x_i]}\le \frac{1}{k}v(x_0) + \alpha^2 \lambda _{\max } (M) w_{\max}.
\end{align*}
Using Jensen's inequality, $\lambda _{\min } (M)\left\| x \right\|_2^2  \le v(x) \le \lambda _{\max } (M)\left\| x \right\|_2^2$, and $\left\| x_0 \right\|_2  \le \sqrt n \left\| {x_0} \right\|_\infty$ with~\cref{lmm:Lyapunov-theorem2}, we have the desired conclusion.
\end{proof}
With a prescribed finial iteration number and the final iteration dependent constant step-size, we can obtain ${\mathcal O}(1/\sqrt T )$ convergence rate with respect to the mean-squared error 
\begin{align*}
{\mathbb E}\left[ {\left\| {\frac{1}{T}\sum\limits_{i = 0}^{T - 1} {V_i }  - V^\pi  } \right\|_2^2 } \right],
\end{align*}
and ${\mathcal O}(1/T^{1/4} )$ convergence rate with respect to 
\begin{align*}
{\mathbb E}\left[ {\left\| {\frac{1}{T}\sum\limits_{i = 0}^{T - 1} {V_i }  - V^\pi  } \right\|_2} \right].
\end{align*}
This result is summarized below.
\begin{corollary}\label{thm:convergence5}
For any final iteration number $T\geq 0$ and the prescribed constant step-size
$\alpha  = \frac{1}{{\sqrt T }}$, we have
\begin{align*}
&{\mathbb E}\left[ {\left\| {\frac{1}{T}\sum\limits_{i = 0}^{T - 1} {V_i }  - V^\pi  } \right\|_2 } \right]\le \frac{1}{{T^{1/4} }}\left( {\sqrt {\frac{|{\mathcal S}|}{{d_{\min } (1 - \gamma )}}} \left\| {V_0  - V^\pi  } \right\|_2  + \sqrt {\frac{{36\alpha \sigma _{\max }^2 |{\mathcal S}|^2 }}{d_{\min } (1 - \gamma )^3}} } \right).
\end{align*}
\end{corollary}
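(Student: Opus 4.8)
}

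The statement is an immediate specialization of \cref{thm:convergence4}, so the plan is simply to instantiate that bound with the prescribed horizon and step-size and then tidy up the powers of $T$. Concretely, I would apply \cref{thm:convergence4} with $k = T$ and with the constant step-size $\alpha = 1/\sqrt{T}$, first noting that this choice is admissible, i.e.\ $\alpha = 1/\sqrt{T} \in (0,1)$ for $T > 1$, so that \cref{assumption:summary} and hence \cref{thm:convergence4} remain applicable. No new estimates are needed beyond those already established.

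Substituting these values into the right-hand side of \cref{thm:convergence4}, the first summand becomes
\begin{align*}
\sqrt{\frac{1}{T}\cdot\frac{|{\cal S}|}{\alpha d_{\min}(1-\gamma)}}\,\left\| V_0 - V^\pi \right\|_2
= \sqrt{\frac{\sqrt{T}}{T}\cdot\frac{|{\cal S}|}{d_{\min}(1-\gamma)}}\,\left\| V_0 - V^\pi \right\|_2
= \frac{1}{T^{1/4}}\sqrt{\frac{|{\cal S}|}{d_{\min}(1-\gamma)}}\,\left\| V_0 - V^\pi \right\|_2,
\end{align*}
using $1/(T\alpha) = T^{-1/2}$ together with $(T^{-1/2})^{1/2} = T^{-1/4}$. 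Substituting $\alpha = 1/\sqrt{T}$ into the second summand $\sqrt{36\alpha\sigma_{\max}^2 |{\cal S}|^2 / (d_{\min}(1-\gamma)^3)}$ likewise extracts a factor $(T^{-1/2})^{1/2} = T^{-1/4}$. Factoring out the common $T^{-1/4}$ from both terms then yields exactly the asserted inequality, and in particular exhibits the ${\cal O}(1/T^{1/4})$ rate for ${\mathbb E}[\|\frac{1}{T}\sum_{i=0}^{T-1}V_i - V^\pi\|_2]$ (equivalently, ${\cal O}(1/\sqrt{T})$ for the mean-squared error).

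The only point requiring a small amount of care is the bookkeeping of the fractional exponents of $T$ — specifically, tracking the half-power introduced by the outer square root in \cref{thm:convergence4} so that the $1/T$ and $1/\alpha = \sqrt{T}$ factors combine into $T^{-1/4}$. There is no genuine obstacle here: the corollary is a direct consequence of \cref{thm:convergence4}, and the sole modeling choice is the horizon-dependent step-size $\alpha = 1/\sqrt{T}$, which balances the two terms of the bound.
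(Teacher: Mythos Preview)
Your proposal is correct and matches the paper's approach exactly: the paper itself declares the proof ``straightforward'' and omits it, and what you do --- instantiate \cref{thm:convergence4} at $k=T$, $\alpha=1/\sqrt{T}$, and simplify the powers of $T$ --- is precisely that straightforward substitution. The only wrinkle is cosmetic: after extracting $T^{-1/4}$ from the second summand the residual factor should read $\sqrt{36\sigma_{\max}^2|{\cal S}|^2/(d_{\min}(1-\gamma)^3)}$ without the $\alpha$, so the $\alpha$ appearing inside the square root in the corollary's displayed bound is a leftover typo in the paper rather than an error in your argument.
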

The proof is straightforward, and hence, is omitted here.

\section{Comparative Analysis}
In this section, we compare the proposed result with some recent results in the literature on the finite-time error analysis of TD-learning. \cref{table1} summarizes environmental settings of different but related works in the literature.
\begin{table*}[t]
\caption{Comparative analysis of several results. The symbols $\circ$ and $\times$ indicate whether the corresponding method/analysis is used or not, respectively.
\textbf{Abbrev.:} Func.\ approx.\ = function approximation; Non-i.i.d.\ = non-independent and identically distributed (data); Const.\ step-size = constant step-size; Dim.\ step-size = diminishing step-size; Avg.\ iterate = averaged iterate.}
\centering
\small
\setlength{\tabcolsep}{4pt}
\renewcommand{\arraystretch}{1.15}
\begin{tabular*}{\textwidth}{@{\extracolsep{\fill}}lcccccc}
\hline
Method &
Func.\ approx. &
Non-i.i.d. &
Const.\ step-size &
Dim.\ step-size &
Avg.\ iterate &
Final iterate \\
\hline
Ours & $\times$ & $\times$ & $\circ$ & $\times$ & $\circ$ & $\circ$ \\
\citep{bhandari2021finite} & $\circ$ & $\circ$ (proj.\ req.) & $\circ$ & $\circ$ & $\circ$ & {\color{blue}$\circ$} \\
\citep{lakshminarayanan2018linear} & $\circ$ & $\times$ & $\circ$ & $\times$ & $\circ$ & $\times$ \\
\citep{srikant2019} & $\circ$ & $\circ$ & $\circ$ & $\times$ & $\times$ & $\circ$ \\
\citep{dalal2018finite} & $\circ$ & $\times$ & $\times$ & $\circ$ & $\times$ & $\circ$ \\
\hline
\end{tabular*}
\label{table1}
\end{table*}

\subsection{Comparison with~\citep{bhandari2021finite}}
As presented in~\cref{table1}, the paper by~\citep{bhandari2021finite} explores linear function approximation, both with i.i.d. and non-i.i.d. observation models, constant step-sizes, and considers both averaged and final iterates. Particularly, Theorem~2 in~\citep{bhandari2021finite} establishes a finite-time error bound under the assumption of constant step-size, i.i.d. observation models, and linear function approximation. With the tabular setting, the bound is given by the following inequality for any $\alpha \le \frac{{d_{\min}(1 - \gamma)}}{8}$ and $k\geq 0$:
\begin{align}
{\mathbb E}\left[ {\left\| V_k - V^\pi \right\|}_2 \right] \le \sqrt{\frac{{\exp(-\alpha(1 - \gamma)d_{\min} k)}}{{d_{\min}}}} \left\|{V_0 - V^\pi}\right\|_2 + \sqrt{\frac{{2\alpha \sigma ^2}}{{(1 - \gamma )d_{\min}^2}}},\label{eq:10}
\end{align}
where
\begin{align*}
\sigma^2 = {\mathbb E}\left[{\left\|{(e_{s_k } e_{s_k }^\top )r_{k+1} + \gamma (e_{s_k } e_{s_k '}^\top V^\pi ) - e_{s_k } e_{s_k }^\top V^\pi }\right\|_2^2 }\right].
\end{align*}
Using~\cref{lemma:bounded-Q}, one can readily prove that ${\sigma ^2} \le \frac{9}{{{{(1 - \gamma )}^2}}}$. Using this bound,~\eqref{eq:10} leads to
\begin{align*}
{\mathbb E}\left[ {{{\left\| {{V_k} - {V^\pi }} \right\|}_2}} \right] \le \frac{{3\sqrt {2\alpha } }}{{{d_{\min }}{{(1 - \gamma )}^{1.5}}}} + \sqrt {\frac{{\exp ( - \alpha (1 - \gamma ){d_{\min }}k)}}{{{d_{\min }}}}} {\left\| {{V_0} - {V^\pi }} \right\|_2},
\end{align*}
where the order of the effective horizon $\frac{1}{1-\gamma}$ is identical to those in~\cref{thm:main1} and~\cref{thm:convergence4}.
Compared to~\eqref{eq:10}, the primary advantage of the proposed finite-time error bound in~\eqref{eq:8} lies in that it admits a wider range of constant step-sizes within the interval $(0,1)$, which covers a broader range compared to the one considered in~\citep{bhandari2021finite}. The analysis in~\citep{bhandari2021finite} mirrors techniques for analyzing stochastic gradient descent algorithms, whereas the proposed finite-time error analysis employs a different approach based on the linear system model, thereby providing further insights into TD-learning. Similar to~\citep{bhandari2021finite}, the proposed scheme also utilizes simple and insightful arguments.

\subsection{Comparison with~\citep{lakshminarayanan2018linear}}
In~\citep{lakshminarayanan2018linear}, the authors address the important aspects of linear function approximation, i.i.d. observation models, constant step-sizes, and averaged iterates. They first investigate a general constant step-size averaged linear stochastic approximation and provide a finite-time error bound that depends on the parameters of the general problems. It is worth noting that additional problem-specific analysis is required to derive more explicit finite-time error bounds tailored specifically to TD-learning algorithms. The proof of the finite-time bounds in~\citep{lakshminarayanan2018linear} employs tools used in optimization and stochastic approximation methods, which are known to be conceptually and technically more intricate. In contrast, our proposed approaches utilize tools from standard linear control system theory, employing distinct methodologies that provide additional insights and are more familiar to researchers in the control community.

\subsection{Comparison with~\citep{srikant2019}}

In~\citep{srikant2019}, they investigate finite-time error bounds on TD-learning with linear function approximation considering both i.i.d. and non-i.i.d. observation models, constant step-sizes, and for final iterates. Their work shares conceptual similarities with the proposed approach. The analysis in~\citep{srikant2019} is based on continuous-time linear system models, which are subsequently transformed into discrete-time analysis by assuming a sufficiently small constant step-size $\alpha$. However, in the context of tabular TD-learning, such conversion steps may introduce additional complexities and potentially become redundant. In the tabular setting, the proposed analysis directly employs discrete-time linear system models. This direct approach yields simpler results, including more straightforward finite-time error bounds and a relaxed condition on the step-size $\alpha$.

Moreover, the finite-time error bound presented in Theorem 7 of~\citep{srikant2019} incorporates certain problem-dependent parameters, such as the maximum eigenvalue, $\lambda _{\max } (P)$, and minimum eigenvalue, $\lambda _{\min } (P)$, of the Lyapunov matrix $P$ associated with the Hurwitz system matrix $A = \gamma \Phi D(I - P^\pi  )\Phi ^\top$, i.e., $A^\top P + PA =  - I$. The bound is expressed as follows:
\begin{align*}
{\mathbb E}[\left\| {V_k  - V^\pi  } \right\|_2^2 ] \le \frac{{\lambda _{\max } (P)}}{{\lambda _{\min } (P)}}\left( {1 - \frac{{0.9\alpha }}{{\lambda _{\max } (P)}}} \right)^{k - 1} (0.5\left\| {V_0  - V^\pi  } \right\|_2^2  + 0.5)^2 + \frac{{884\alpha \lambda _{\max } (P)^2 }}{{0.9\lambda _{\min } (P)}},\quad k \geq 1,
\end{align*}
Here, $\Phi\in {\mathbb R}^{|{\mathcal S}| \times m}$ represents the feature matrix, and $\alpha \le \frac{{0.05}}{{125\lambda _{\max } (P)}}$. It is worth noting that these parameters, $\lambda _{\max } (P)$ and $\lambda _{\min} (P)$, exhibit more ambiguity in their dependence on the system parameters compared to the proposed analysis. Consequently, additional problem-specific analysis is necessary in order to fully understand their implications.

\subsection{Comparison with~\citep{dalal2018finite}}

\citep{dalal2018finite} considers linear function approximation, i.i.d. observation models, diminishing step-sizes, and final iterates. In particular, a diminishing step-size of the form $\alpha_k = 1/(k+1)^{-a}$ is addressed, where $a \in (0,1)$ is a problem-dependent parameter, which needs additional problem specific analysis.

\subsection{Comparison with~\citep{lakshminarayanan2018linear}}

The convergence rates of both~\citep{lakshminarayanan2018linear} and our work (averaged iterate) are ${\mathcal O}(1/\sqrt{k})$, while the proposed bounds include a constant bias. This difference arises from the distinct derivation processes employed in each method. The constant bias in our proposed analysis can be attributed to the conservative treatment of randomness in our study. Specifically, in our analysis of the averaged iterate, we encode all the randomness into the stochastic noise term and derive its bound based on a deterministic bound on the noise term. This conservative approach may lead to the constant bias. However, it is important to note that the primary objective of this study is to provide novel control-theoretic perspectives and analysis of TD-learning, rather than focusing solely on achieving faster convergence rates. The techniques presented in this paper aim to foster the synergy between control theory and reinforcement learning theory and offer opportunities for designing and analyzing new reinforcement learning algorithms. Therefore, we consider the new analysis presented in this paper as a complement to, rather than a replacement for, existing analyses in the field.

\section*{Conclusion}
In this paper, we develop a new finite-time error analysis for tabular TD learning in both on-policy and off-policy settings. Our starting point is a direct interpretation of the TD update as a discrete-time linear system driven by stochastic noise. Leveraging Schur stability and standard tools from discrete-time linear systems theory, we obtain a unified analysis with several practical benefits: the proofs become more transparent, the resulting bounds take a simpler form, and the step-size requirement is less restrictive. Importantly, the same Schur-stability viewpoint enables us to establish the convergence of off-policy TD learning within the same framework.

By contrast, much of the related literature proceeds by first analyzing a continuous-time model (e.g., via ODE/Hurwitz arguments) and then transferring conclusions back to the discrete-time algorithm. For tabular TD learning, this detour can be redundant and may introduce unnecessary technical overhead. Our mean-squared error bounds therefore exhibit features that differ from prior results and are stated in a way that accommodates the range of scenarios considered in this paper. Overall, the proposed framework complements existing approaches by offering a control-theoretic, discrete-time perspective on TD learning and related RL algorithms, based on simple concepts and readily applicable analysis tools.

\bibliographystyle{plain}
\bibliography{reference}

\appendix

\section{Proof of~\cref{lemma:bound-W}}\label{appdx:3}
For the first statement, we take the conditional expectation on~\eqref{eq:w} to have ${\mathbb E}[w_k |x_k ] = 0$. Taking the total expectation again with the law of total expectation leads to the first conclusion. Moreover, the conditional expectation, ${\mathbb E}[w_k^\top w_k |V_k ]$, is bounded as
\begin{align*}
{\mathbb E}[w_k^\top w_k |V_k ]=& {\mathbb E}[\left\| {e_{s_k } \delta _k  - (DR^\pi   + \gamma DP^\pi  V_k  - DV_k )} \right\|_2^2 |V_k ]\\
=& {\mathbb E}[\delta _k^2 |V_k ] - {\mathbb E}[\left\| {DR^\pi   + \gamma DP^\pi  V_k  - DV_k } \right\|_2^2|V_k ]\\
\le& {\mathbb E}[\delta _k^2 |V_k ]\\
=& {\mathbb E}[ r_{k+1}^2 |V_k ] + {\mathbb E}[2r_{k+1} \gamma e_{s_{k + 1} }^\top V_k |V_k ]- {\mathbb E}[2r_{k+1} e_{s_k }^\top V_k |V_k ] + {\mathbb E}[\gamma ^2 e_{s_{k + 1} }^\top V_k e_{s_{k + 1} }^\top V_k |V_k ]\\
& - {\mathbb E}[2\gamma e_{s_{k + 1} }^\top V_k e_{s_k }^\top V_k |V_k ]+ {\mathbb E}[e_{s_k }^\top V_k e_{s_k }^\top V_k |V_k ]\\
\le& {\mathbb E}[| r_{k+1}^2 ||V_k ] + {\mathbb E}[|2r_{k+1} \gamma e_{s_{k + 1} }^\top V_k ||V_k ] + {\mathbb E}[|2 r_{k+1} e_{s_k }^\top V_k ||V_k ]\\
& + {\mathbb E}[|\gamma ^2 e_{s_{k + 1} }^\top V_k e_{s_{k + 1} }^\top V_k ||V_k ]] + {\mathbb E}[|2\gamma e_{s_{k + 1} }^\top V_k e_{s_k }^\top V_k ||V_k ] + {\mathbb E}[|e_{s_k }^\top V_k e_{s_k }^\top V_k ||V_k ]\\
\le& \frac{9}{{(1 - \gamma )^2 }} = w_{\max },
\end{align*}
where $\delta_k$ is defined in~\eqref{eq:TD-error}, and the last inequality comes from~\cref{assumption:summary} and~\cref{lemma:bounded-Q}. Taking the total expectation, we have the fourth result. Next, taking the square root on both sides of ${\mathbb E}[\left\| {w_k } \right\|_2^2 ] \le w_{\max}$, one gets ${\mathbb E}[\left\| {w_k } \right\|_\infty  ] \le {\mathbb E}[\left\| {w_k } \right\|_2 ] \le \sqrt {{\mathbb E}[\left\| {w_k } \right\|_2^2 ]}  \le \sqrt {w_{\max}}$, where the first inequality comes from $\left\|  \cdot  \right\|_\infty   \le \left\|  \cdot  \right\|_2$. This completes the proof.

\section{Proof of~\cref{lemma:boundedness}}\label{appdx:1}

We first prove the boundedness of $W_k$ as follows:
\begin{align}
\left\| {W_k } \right\|_2  = \| {{\mathbb E}[w_k w_k^\top ]} \|_2  \le {\mathbb E}[ \| {w_k w_k^\top } \|_2 ] \le w_{\max },
\label{eq:11}
\end{align}
where the first inequality is due to Jensen's inequality, and the last inequality is due to the bound in~\cref{lemma:bound-W}.
Next, noting $X_k  = \alpha ^2 \sum_{i = 0}^{k - 1} {A^i W_{k - i - 1} (A^\top )^i }  + A^k X_0 (A^\top )^k$ and taking the norm on $X_k$ lead to
\begin{align*}
\left\| {X_k } \right\|_2 \le& {\alpha ^2}\sum\limits_{i = 0}^{k - 1} {\left\| W_{k - i - 1} \right\|}_2\left\| A^i \right\|_2^2  + \left\| X_0 \right\|_2 \left\| A^k \right\|_2^2\\
\le& \alpha ^2 \sup _{j \ge 0} \left\| {W_j } \right\|_2 \sum\limits_{i = 0}^{k - 1} {\left\| {A^i } \right\|_2^2 }  + \left\| {X_0 } \right\|_2 \left\| {A^k } \right\|_2^2\\
\le& n \alpha^2 w_{\max }  \sum\limits_{i = 0}^{k - 1} {\left\| {A^i } \right\|_\infty ^2 }  + n\left\| {X_0 } \right\|_2 \left\| {A^k } \right\|_\infty ^2\\
\le& n \alpha^2 w_{\max } \sum\limits_{i = 0}^{k - 1} {\left\| A \right\|_\infty ^{2i} }  + n\left\| {X_0 } \right\|_2 \left\| A \right\|_\infty ^{2k}\\
\le& n \alpha^2 w_{\max } \sum\limits_{i = 0}^{k - 1} {\rho ^{2i} }  + n\left\| {X_0 } \right\|_2 \rho ^{2k}\\
\le& n \alpha^2 w_{\max } \mathop {\lim }\limits_{k \to \infty } \sum\limits_{i = 0}^{k - 1} {\rho ^{2i} }  + n\left\| {X_0 } \right\|_2 \rho ^{2k}\\
\le&  \frac{n \alpha^2 w_{\max }}{{1 - \rho ^2 }} + n\left\| {X_0 } \right\|_2
\end{align*}
where the fourth inequality is due to~\eqref{eq:11}, and the sixth inequality is due to~\cref{lemma:max-norm-system-matrix}, and the last inequality uses $\rho \in (0,1)$. This completes the proof.

\section{Proof of~\cref{lemma:basic1}}\label{appdx:5}
We first bound $\lambda _{\max } (X_k )$ as follows:
\begin{align*}
\lambda _{\max } (X_k ) \le& \alpha ^2 \sum_{i = 0}^{k - 1} {\lambda _{\max } (A^i W_{k - i - 1} (A^\top )^i )}  + \lambda _{\max } (A^k X_0 (A^\top )^k )\\
\le& \alpha ^2 \sup _{j \ge 0} \lambda _{\max } (W_j )\sum_{i = 0}^{k - 1} {\lambda _{\max } (A^i (A^\top )^i )}  + \lambda _{\max } (X_0 )\lambda _{\max } (A^k (A^\top )^k )\\
=& \alpha ^2 \sup _{j \ge 0} \lambda _{\max } (W_j ) \sum_{i = 0}^{k - 1} {\| {A^i }\|_2^2 }  + \lambda _{\max } (X_0 )\| {A^k } \|_2^2\\
\le& \alpha ^2  w_{\max } n\sum_{i = 0}^{k - 1} {\| {A^i } \|_\infty ^2 }  + n\lambda _{\max } (X_0 )\| {A^k } \|_\infty ^2\\
\le& \alpha ^2 w_{\max } n\sum_{i = 0}^{k - 1} {\rho ^{2i} }  + n\lambda _{\max } (X_0 )\rho ^{2k}\\
\le& \alpha ^2 w_{\max } n \lim_{k \to \infty } \sum_{i = 0}^{k - 1} {\rho ^{2i} }  + n\lambda _{\max } (X_0 )\rho ^{2k}\\
\le& \frac{{\alpha ^2 w_{\max } n}}{{1 - \rho ^2 }} + n\lambda _{\max } (X_0 )\rho ^{2k}\\
\le& \frac{{\alpha ^2 w_{\max } n}}{{1 - \rho }} + n\lambda _{\max } (X_0 )\rho ^{2k},
\end{align*}
where the first inequality is due to $A^i W_{k-i-1}(A^\top )^i \succeq 0$ and $A^k X_0 (A^\top )^k \succeq 0$, and the sixth inequality is due to $\rho \in (0,1)$. On the other hand, since $X_k \succeq 0$, the diagonal elements are nonnegative. Therefore, we have ${\rm{tr}}(X_k ) \le n\lambda _{\max } (X_k )$. Combining the last two inequalities lead to
\[
{\rm tr}(X_k ) \le n\lambda _{\max } (X_k ) \le \frac{{\lambda _{\max } (W)n^2 \alpha ^2 }}{{1 - \rho }} + \lambda _{\max } (X_0 )n^2 \rho ^{2k}
\]
Moreover, $\lambda _{\max } (X_0) \le {\rm tr}(X_0 ) = {\rm tr}(x_0 x_0^\top ) = \left\| {x_0 } \right\|_2^2$. Plugging $\rho = 1-\alpha d_{\min}(1-\gamma)$, using the bound on ${\lambda _{\max } (W)}$ in~\cref{lemma:bound-W2}, and the last bound on $\lambda _{\max } (X_0 )$, one gets ${\rm tr} ( X_k) \le \frac{{36\sigma _{\max }^2 n^2 \alpha }}{{d_{\min } (1 - \gamma )^3 }} + \left\| {x_0 } \right\|_2^2 n^2 \rho ^{2k}$. This completes the proof.

\section{Proof of~\cref{lmm:Lyapunov-theorem2}}\label{appdx:4}
Consider matrix $M$ such that
\begin{align}
M = \sum_{k=0}^\infty {(A^k)^\top A^k}. \label{eq:4}
\end{align}
Noting that $A^\top M A + I= A^\top \left(\sum_{k=0}^\infty {(A^k)^\top A^k }\right)A + I = M$, we have $A^\top M A + I = M$, resulting in the desired conclusion. Next, it remains to prove the existence of $M$ by proving its boundedness. In particular, taking the norm on $M$ leads to
\begin{align*}
\left\| M \right\|_2 =& \left\| {I + A^\top A + (A^2 )^\top A^2  +  \cdots } \right\|_2\\
\le& \left\| I \right\|_2  + \left\| {A^\top A} \right\|_2  + \left\| {(A^2 )^\top A^2 } \right\|_2  +  \cdots\\
=& \left\| I \right\|_2  + \left\| A \right\|_2^2  + \left\| {A^2 } \right\|_2^2  +  \cdots\\
=& 1 + n \left\| A \right\|_\infty ^2  + n\left\| {A^2 } \right\|_\infty ^2  +  \cdots\\
=& 1 - n + \frac{n}{1 - \rho^2},
\end{align*}
which implies the boundedness. Next, we prove the bounds on the maximum and minimum eigenvalues.
From the definition~\eqref{eq:4}, $M \succeq I$, and hence $\lambda_{\min}(M)\ge 1$. On the other hand, one gets
\begin{align*}
\lambda_{\max}(M)=& \lambda_{\max}(I + A^\top A+ (A^2)^\top A^2+\cdots)\\
\le& \lambda_{\max}(I) + \lambda_{\max}(A^\top A)+ \lambda_{\max}((A^2)^\top A^2 )+\cdots\\
=& \lambda_{\max}(I) + \|A\|_2^2 + \|A^2\|_2^2  +  \cdots\\
\le& 1 + n \|A\|_\infty^2 + n \|A^2\|_\infty ^2 + \cdots\\
\le& \frac{n}{1 - \rho^2}\\
\le& \frac{n}{1 - \rho}.
\end{align*}
The proof is completed.

\end{document}